\newtheorem{proposition}{Proposition}
\newcommand{\etal}{\textit{et al.}}
\title{\LARGE \bf
Leveraging Multiple Environments for Learning and Decision Making: a Dismantling Use Case
}
\author{Alejandro Su\'arez-Hern\'andez$^{1}$ and Thierry Gaugry$^{2}$ and Javier Segovia-Aguas$^{1}$\\%
and Antonin Bernardin$^{2}$ and Carme Torras$^{1}$ and Maud Marchal$^{2}$ and Guillem Aleny\`a$^{1}$
\thanks{*The research leading to these results has received funding from the EU H2020 research and innovation programme under grant agreement no. 731761, IMAGINE; the HuMoUR project TIN2017-90086-R (AEI/FEDER, UE); and AEI through the María de Maeztu Seal of Excellence to IRI (MDM-2016-0656).}
\thanks{$^{1}$Authors are with Institut de Rob\`otica i Inform\`atica Industrial, CSIC-UPC
Llorens i Artigas 4-6,
08028, Barcelona, Spain 
        {\tt \{asuarez,jsegovia,torras,galenya\}@iri.upc.edu}}%
\thanks{$^{2}$Authors are with Univ. Rennes, INSA, IRISA, Inria, France
        {\tt \{Maud.Marchal, Thierry.Gaugry, Antonin.Bernardin\}@inria.fr}}%
}
\begin{document}


\onecolumn

\vspace*{\fill}

\noindent {\Large \bf Copyright notice}

\vspace{1em}

\noindent \copyright{} 2020 IEEE.  Personal use of this material is permitted.  Permission from IEEE must be obtained for all other uses, in any current or future media, including reprinting/republishing this material for advertising or promotional purposes, creating new collective works, for resale or redistribution to servers or lists, or reuse of any copyrighted component of this work in other works.

\vspace{4em}

\noindent {\Large \bf Citation of this article}

\vspace{1em}

\noindent This paper has presented during the IEEE/RSJ IROS 2020 conference. Please, in order to cite this work, refer to the conference version in IEEEXplore: {\tt https://ieeexplore.ieee.org/document/9341182}

\vspace{2em}

\vspace*{\fill}


\twocolumn

\maketitle
\thispagestyle{empty}
\pagestyle{empty}

\begin{abstract}
Learning is usually performed by observing real robot executions. Physics-based simulators are a good alternative for  providing highly valuable information while avoiding costly and potentially destructive robot executions. We present a novel approach for learning the probabilities of symbolic robot action outcomes. This is done leveraging different environments, such as physics-based simulators, in execution time. To this end, we propose MENID~(\textit{Multiple Environment Noise Indeterministic Deictic}) rules, a novel representation able to cope with the inherent uncertainties present in robotic tasks. MENID rules explicitly represent each possible outcomes of an action, keep memory of the source of the experience, and maintain the probability of success of each outcome. We also introduce an algorithm to distribute actions among environments, based on previous experiences and expected gain.
Before using physics-based simulations, we propose a methodology for evaluating different simulation settings and determining the least time-consuming model that could be used while still producing coherent results. We demonstrate the validity of the approach in a dismantling use case, using a simulation with reduced quality as simulated system, and a simulation with full resolution where we add noise to the trajectories and some physical parameters as a representation of the real system. 

\end{abstract}

\section{Introduction}


Probabilistic propositional planning~\cite{littman1997probabilistic} consists in computing behavioral policies reasoning over a set of actions whose outcomes are uncertain. This policy is meant to maximize a reward score. Since reward maximization subsumes goal satisfaction, probabilistic planning can be considered an extension over classical AI planning.
Therefore, probabilistic planning can be regarded as a more powerful tool for handling the kind of task that typically arise in robotics. In this paper, we focus on the problem of learning the categorical distribution associated to the outcomes of a stochastic symbolic action, with the support of simulations.

Reinforcement Learning (RL) usually requires exhaustive interaction with the environment and results in less explainable reasoning. The work of Eppe \etal{}\cite{Eppe2019FromProblem-Solving} tackles these issues through the combination of a symbolic planner to set the subgoal roadmap, and a RL module to decide the specifics on how each subgoal is achieved. They do not tackle, however, the challenge of learning the distributions of symbolic stochastic actions.

In pure AI planning, it is often assumed that accurate models of the actions are available. This kind of knowledge can hardly be taken for granted in many real world settings. While several approaches exist for automatically acquiring deterministic models~\cite{Aineto2018a, Yang2007, Cresswell2013, Zhuo2014}, probabilistic ones require special care. Previous approaches have explored the idea of information gathering for contingent plan execution~\cite{draper1994probabilistic,pasula2007learning,martinez2017relational}. These approaches share a similar methodology: they interleave learning, reasoning and execution. However, if a robot applied this general strategy, it would be prone to put itself and its surroundings in harm's way. Lipovetzky \etal{}~\cite{Lipovetzky2015e} use simulators, like us, but their premise is based on blind search without considering the actions' model.


\begin{figure}[!tb]
    \centering
    \includegraphics[width=\columnwidth]{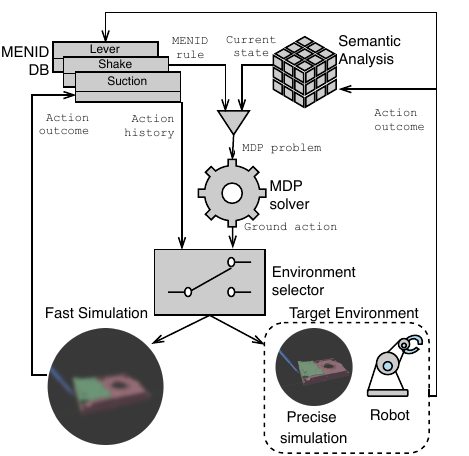}
    \caption{System's architecture. The action database contains MENID rules updated with experiences from both the simulated and the target environments. In turn, the execution history of the action is used for optimal decision making. 
    The environment selection criteria for a given action depends on: (1) the simulator's accuracy; and (2) the confidence in the action's empirical outcome distribution.}
    \label{fig:overview}
\end{figure}

We propose a novel knowledge gathering algorithm that distributes executions among different environments. One of these environments is the \emph{target} environment (e.g. real world or highly accurate simulator). Executing actions in this environment is costly and risky. Therefore, we would like to minimize the amount of \emph{exploratory} actions performed within it. On the other hand, we have one or more \emph{test} environments that act as a proxy of the target one (e.g. fast simulator). Execution in a test environment is safe and, usually much less costly.

In this work, we rely on SOFA, a physics-based simulator~\cite{faure:hal-00681539} to carry out accurate predictions of the actions’ outcome. With SOFA, we are able to simulate several actions varying the different physical parameters. With high-resolution models, these simulations are very accurate, although also very time-consuming. A trade-off between accuracy and computation time performance must be determined.


Our methodology is suitable to be of use in a wide range of robotic tasks. In this paper, we demonstrate it in a two-environments setting (see Fig.~\ref{fig:overview}). We focus on the use case of dismantling electro-mechanical devices. This kind of problems is subject to non-determinism in the action execution, as manipulation skills (such as levering) depend on factors that escape our control (e.g. friction, jerky motions) and are somewhat unpredictable. 

Our contributions are summarized as follows:
\begin{itemize}
    \item A novel learning algorithm that decides how to gather data from two sources to learn the probabilistic outcome of high-level actions.
    \item MENID (\textit{Multiple Environment Noise Indeterministic Deictic}) rules, an extension of NID rules~\cite{pasula2007learning} to cope with the multi-environment setting.
    \item A complete workflow illustrating our approach by using physics-based simulations for generating data for the learning process.
\end{itemize}

\section{Learning Methodology}
\label{sec:learning-methodology}

We want to enable our robot to learn and perform optimal decision making in execution time. To this end, we propose an algorithm that borrows notions from probability estimation and information theory.

In this section, we present first a new action model meant to cope with several environments. Then, we outline the steps of our algorithm. Afterwards, we elaborate further on its key components.

\subsection{MENID rules}
\label{sec:menid-rules}

In the past, NID (\textit{Noise Indeterministic Deictic}) rules have been proposed as a mechanism to model probabilistic actions~\cite{pasula2007learning, martinez2017relational}. These rules enjoy the following advantages:
\begin{itemize}
    \item \textit{Deictic references}: variables that identify objects related to the parameters and are ground in the precondition.
    \item \textit{A noise effect} that covers unknown outcomes or outcomes that cannot be modeled explicitly.
    \item \textit{Derived predicates} that depend on the truth value of other predicates in the state and that allow expressing complex preconditions.
\end{itemize}

A rule has associated a list of outcomes, each with its probability of happening. However, this probability is bound to a single environment. To handle multiple environments, we define a new kind of rule called MENID (\textit{Multiple Environment NID}) as follows
\begin{equation}
a_r(\chi):\phi(\chi) \rightarrow \begin{cases}
{}^1\!p_{r,1} \dots {}^e\!p_{r,1}       & : \Omega_{r,1}(\chi) \\
\dots \\
{}^1\!p_{r,n_r} \dots {}^e\!p_{r,n_r}   & : \Omega_{r,n_r}(\chi) \\
{}^1\!p_{r,0} \dots {}^e\!p_{r,0}       & : \Omega_{r,0}(\chi)
\end{cases},
\end{equation}
where
\begin{itemize}
    \item $ a_r $ is the action associated to this rule.
    \item $ \chi $ is the set of variables for the rule, composed of two disjoint subsets: the action's parameters $ \chi_a $, and the rule's deictic references $ \chi_r $.
    \item $ \phi_r(\chi) $ is the preconditon: a set of predicates that must be present in the current state for this rule to trigger. When $ a_r $ is executed, only the rule whose preconditions are satisfied is applied.
    \item Each $ \Omega_{r,i}(\chi) $ is a different possible outcome (i.e. a set of predicates that are added or deleted from the state).
    \item $ \Omega_{r,0}(\chi) $ is the noise effect.
    \item Each $ {}^j\!p_{r,i} $ represents outcome $ \Omega_{r,i}(\chi) $'s probability in environment $ j $ (up to $ e $ environments).
\end{itemize}

An action can consist of several rules, but a rule is linked to just one action. This allows to conveniently define actions that, depending on the context, affect the world differently.

The distribution over outcomes of a rule is categorical. Therefore, outcome frequency when repeatedly triggering a rule results follows a multinomial distribution. If $ {}^j\!x_{r,i} $ is the absolute frequency of outcome $ i $ in environment $ j $, and the number of times that $ r $ has been triggered in $ j $ is $ \sum_i^{n_r} {}^j\!x_{r,i} $,  the empirical categorical distribution is

\begin{equation}
    {}^j\!p_{r,i} \approx {}^j\!q_{r,i} = \frac{{}^j\!x_{r,i}}{ \sum_{i=0}^{n_r} {}^j\!x_{r,i} }.
\end{equation}

In the next sections, we focus on the particular case of two environments ($ e = 2 $), being $ j = 1 $ the target environment, and $ j = 2 $ the test one. We describe our methodology to combine both sources of information to approximate the target distribution.

    


\subsection{Algorithm Overview}

Algorithms to learn NID rules from experiences already exist, particularly for the case of guided demonstrations~\cite{martinez2017relational} and exogenous effects~\cite{Martinez_JMLR17}. We focus here on learning the different outcome probabilities $ {}^j\!p_{r,i} $. Thus, we assume that the system is initialized with all the MENID rules representing the actions available to the robot, along with their preconditions. For each rule, a list of outcomes (postconditions) $ \Omega_{r,i}(\chi) $. is also present.

The main procedure of our methodology is sketched in Algorithm~\ref{alg:main-loop}. It has been tailored to the particular case of one target environment and one test environment. However, it can be easily extended to accommodate more than two sources. 
Next, we describe the details of the algorithm.

\textbf{Initialization:} Set $ E $ represents the gathered experiences, and thus is initialized to the empty set (line 1).

\textbf{Approximating and solving the MDP:} The first step in each iteration of the main loop is to retrieve the current state from the target environment (line 3). Afterwards, an empirical transition model $ P_{target}(s,a,s') $, is constructed from the current MENID rules (line 4). This construction is detailed in Sec.~\ref{sec:combining-information-from-multiple-sources}. $ P_{target} $ is used to find the action best suited for the current state, solving an MDP (line 5). Our algorithm does not impose any particular solver. Some options are standard algorithms like Value Iteration or Thompson sampling~\cite{Russo2018}. There is also state-of-the-art solvers like PROST~\cite{keller2012prost}, Gourmand~\cite{kolobov2012blrtdp}, or FF-Hindsight~\cite{issakkimuthua2015hindsight}.

\textbf{Testing an action:} Low confidence on the empirical outcome distribution inside the test environment is detrimental to the calculation of $ P_{target}(s,a,s') $ (Sec.~\ref{sec:combining-information-from-multiple-sources}). We determine that this is the case when $ \delta $, the estimated error of the empirical distribution (Sec.~\ref{sec:estimating-a-categorical-distribution}), is larger than a predefined threshold, e.g. 0.01 (line 6). Action testing is time-limited (lines 9, 11 and 13). Each experience is extracted and added to $ E $, labeled as $ test $ (lines 10 and 12). The algorithm marks tested actions (line 7), so they are not tested again in the next iteration. This avoids an overly cautious behavior. 

\textbf{Executing an action:} If an action is trusted enough, or if it has already been tested in the last iteration, it is scheduled for execution in the target environment. This action is unmarked so it is eligible again for testing (line 16). The action is executed (line 17) and the experience is added to $ E $ labeled as $ target $ (line 18).

\textbf{Updating MENID rules:} At the end of the iteration, MENID rules are updated according to the new experiences appended to $ E $ (line 20).

\begin{algorithm}[tb]
    \centering
    \caption{Learning-execution loop}
    \label{alg:main-loop}
    \begin{algorithmic}[1]
        \renewcommand{\algorithmicrequire}{\textbf{Input:}}
        \newcommand{\getCurrentState}{\textit{get\_current\_state}}
        \newcommand{\stepTest}{\textit{test\_action}}
        \newcommand{\execAction}{\textit{exec\_action}}
        \REQUIRE reward function $ R $, action set $ A = \{a_1, \dots, a_n\} $,
        time allotment for action testing $ T $
        \STATE $ E \leftarrow \emptyset $
        \LOOP
            \STATE $ s \leftarrow  $ \getCurrentState()
            \STATE $ P_{target} \leftarrow $ target transition derived from MENID rules
            \STATE $ a \leftarrow $ optimal action in $ s $ according to $ P_{target} $
            \IF {$ a $ not marked and $ \delta > \delta_{thres} $ } 
                \STATE mark $ a $
                \STATE $ t \leftarrow T $
                \WHILE { $ t > 0 $ }
                    \STATE $ s' \leftarrow $ \stepTest($ s, a $)
                    \STATE $ elapsed \leftarrow $ time spent testing $ a $
                    \STATE $ E \leftarrow E \cup \{ (test, s, a, s') \} $
                    \STATE $ t \leftarrow t - elapsed $
                \ENDWHILE
            \ELSE
                \STATE remove mark from $ a $
                \STATE $ s' \leftarrow $ \execAction($ a $)
                \STATE $ E \leftarrow E \cup \{ (target, s, a, s') \} $
            \ENDIF
            \STATE Update MENID rules according to $ E $
        \ENDLOOP
    \end{algorithmic}
\end{algorithm}

\subsection{Estimating a Categorical Distribution}
\label{sec:estimating-a-categorical-distribution}

Algorithm~\ref{alg:main-loop} needs to evaluate whether the distribution of an action in the test environment is estimated with enough confidence (line 6). Should that be the case, further testing can be skipped. Sec.~\ref{sec:menid-rules} already described the simplest way to estimate the distribution's parameters. However, we also need to evaluate the quality of the estimation.

Thompson~\cite{thompson1987sample} gives a method to find sample size so
\begin{equation}
Pr(|{}^j\!p_{r,i}-{}^j\!q_{r,i}| > \delta, 0 \leq i \leq n_r) \leq \epsilon,
\end{equation}
where $ {}^j\!q_{r,i} $ is an estimation of $ p_{r,i} $, $ \delta $ is the maximum error desired for the parameters of the categorical distribution, and $ 1 - \epsilon $ is the confidence on having the estimation error within $ \delta $ tolerance.

For each $ \delta, \epsilon $ and number of outcomes ($ n_r + 1$), Thompson's approach calculates the required sample size by approximating the estimation of each parameter by a Gaussian distribution. It assumes the worst-case scenario $ {}^j\!p_{r,i} = \frac{1}{1+n_r} $. This results in conservative estimates of the sample size.

We work the other way around: given the vector of frequencies $ \boldsymbol{f} = [ {}^j\!x_{r,0} \dots {}^j\!x_{r,n_r} ] $ and the desired $ \epsilon $, we calculate a $ \delta $ bound. We stop drawing observations when $ \delta $ gets below a certain threshold.

Obtaining $ \delta $ analytically is difficult. However, it can be easily approximated with the help of sampling. It is known that the conjugate prior of the categorical distribution is the \textit{Dirichlet distribution}, $ \mathrm{Dir}(\boldsymbol{\alpha}) $, where $ \boldsymbol{\alpha} = [ 1+{}^j\!x_{r,0} \dots 1+{}^j\!x_{r,n_r} ] $. This assigns a likelihood to each choice of $ {}^j\!p_{r,i} $ parameters in base to the observations. The particular case of $ n_r = 1 $ (the noise effect and a regular outcome) corresponds to the Beta distribution.

The full process for approximating $ \delta $ is described in Algorithm~\ref{alg:delta-bound}. A Dirichlet distribution $ D $ over the parameters is defined (lines 1). Then, $ D $ is sampled $ S $ times (lines 3, 4). For each sampled set of parameters, the error with respect to $ D $'s mode is computed and stored in the $ \boldsymbol{e} $ vector (line 7). Finally, the $ q $th error in the sorted $ \boldsymbol{e} $ vector is picked, where $ q $ is the index that corresponds to quartile $ (1 - \epsilon) \cdot S $ (lines 9, 10, 11). If $ S $ is sufficiently high, $ e_q $ is a very accurate approximation to $ \delta $. Fig.~\ref{fig:delta-over-time} shows two examples in which this is the case.

\begin{algorithm}[tb]
    \centering
    \caption{$ \delta $ bound calculation}
    \label{alg:delta-bound}
    \begin{algorithmic}[1]
        \renewcommand{\algorithmicrequire}{\textbf{Input:}}
        \renewcommand{\algorithmicensure}{\textbf{Output:}}
        \newcommand{\sort}{\textit{sort}}
        \newcommand{\round}{\textit{round}}
        \newcommand{\sample}{\textit{sample}}
        \newcommand{\return}{\textbf{return}}
        \REQUIRE confidence $ \epsilon $, vector $ \boldsymbol{\alpha} = [ 1+x_0 \dots 1+x_n ] $, sample size $ S $
        \ENSURE $ \delta $ s.t. $ Pr(|p_i-q_i| > \delta, 0 \leq i \leq n) \leq \epsilon $
        \STATE $ D \leftarrow \mathrm{Dir}(\boldsymbol{\alpha}) $
        \STATE Initialize vector $ \boldsymbol{e} = [e_1 \dots e_S]$
        \FOR{$ 1 \leq j \leq S $}
            \STATE $ \boldsymbol{p'} = [ p'_1 \dots p'_n ] \leftarrow $ \sample($ D $)
            \STATE $ e_j \leftarrow \max_i | p'_i - \frac{x_i}{\sum_{i'=0}^n x_{i'} } |$
        \ENDFOR
        \STATE Sort $ \boldsymbol{e} $
        \STATE $ q \leftarrow $ \round($ (1 - \epsilon) \cdot S $)
        \STATE \return{} $ e_q $
    \end{algorithmic}
\end{algorithm}

\begin{figure}[tb]
    \centering
    \subfloat[]{%
       \includegraphics[width=0.5\linewidth]{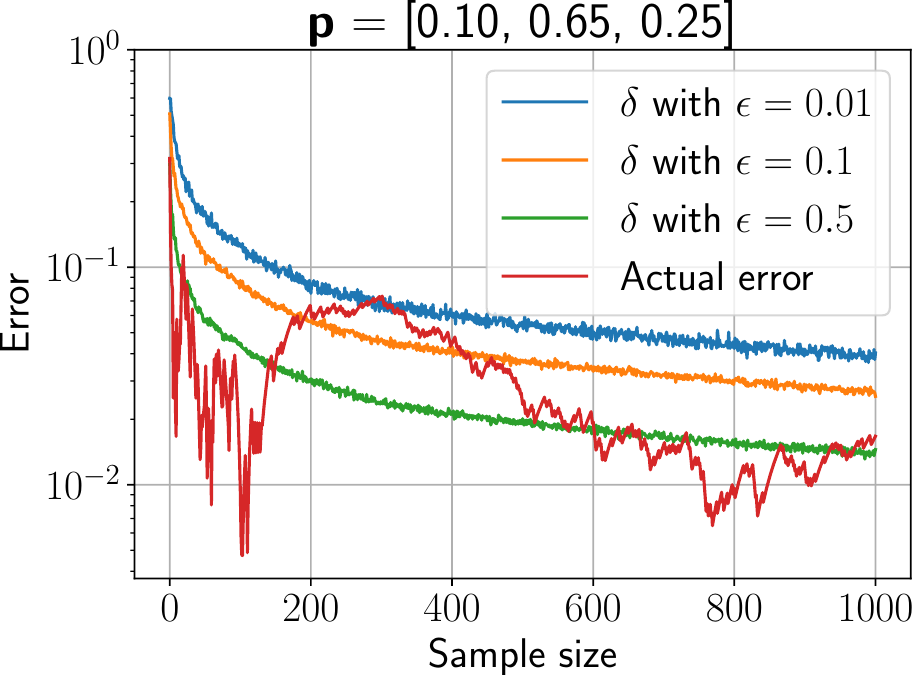}}
    \subfloat[]{%
       \includegraphics[width=0.5\linewidth]{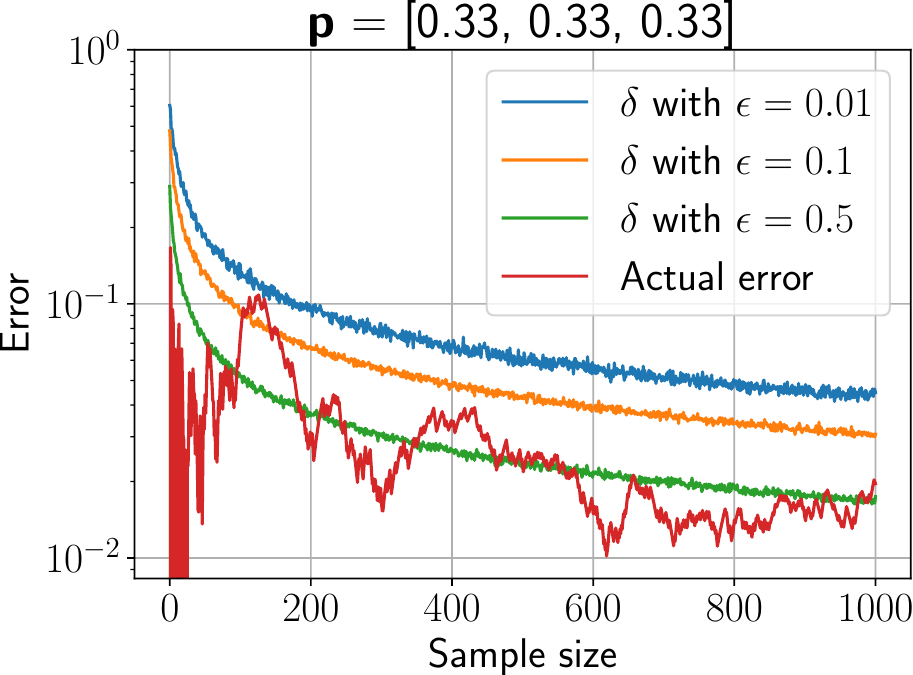}}
    \caption{Error bounds ($ \delta $) for the estimation of two categorical distributions as sample size increases. The actual error is calculated as $ \max_i | {}^j\!p_{r,i} - {}^j\!q_{r,i} | $. Notice that, in both (a) and (b), the bounds follows quite closely the trend of the error. Moreover, the bounds with $ \epsilon = 0.01 $ and $ \epsilon = 0.1 $ are seldom exceeded. The worst case scenario analyzed by Thompson (uniform distribution) is shown in (b).}
    \label{fig:delta-over-time}
\end{figure}

\subsection{Combining Information from Multiple Sources}
\label{sec:combining-information-from-multiple-sources}

Experiences come either from the test or the target environment. We need a strategy to merge these experiences. Let us first present two basic results:

\begin{proposition}
If the test environment is a perfect representative of the target one, then $ {}^1\!p_{r,i} = {}^2\!p_{r,i} $, and $ {}^1\!p_{r,i} $ can be estimated taking full advantage of the test sample frequencies:

\begin{equation}
{}^1\!p_{r, i} \approx {}^1\!q_{r, i} = \frac{{}^1\!x_{r,i} + {}^2\!x_{r,i}}{N_1 + N_2},
\end{equation}
where $ N_j = \sum_{i=0}^{n_r} {}^j\!x_{r,i} $.
\end{proposition}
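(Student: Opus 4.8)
The plan is to handle the statement in two parts. The first assertion, ${}^1\!p_{r,i} = {}^2\!p_{r,i}$, I would obtain directly from the meaning of ``perfect representative'': to say that the test environment perfectly represents the target one is precisely to say that triggering rule $r$ in either environment draws its outcome from the \emph{same} categorical distribution. So I would make this the working definition of ``perfect representative'' up front, and note that the equality of per-outcome probabilities is therefore the hypothesis of the proposition rather than something that needs a derivation.

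For the second assertion, I would argue that under this hypothesis ${}^1\!x_{r,i}$ and ${}^2\!x_{r,i}$ are the absolute frequencies of, respectively, $N_1$ and $N_2$ independent draws from a single common categorical distribution with parameters $p_{r,i} := {}^1\!p_{r,i} = {}^2\!p_{r,i}$. Because the two samples share the same distribution and the trials are independent, their concatenation is itself an i.i.d.\ sample of size $N_1 + N_2$ from that distribution, in which outcome $i$ is observed ${}^1\!x_{r,i} + {}^2\!x_{r,i}$ times. Applying the empirical-frequency estimator of Sec.~\ref{sec:menid-rules} to this pooled sample — equivalently, the maximum-likelihood estimator of the pooled multinomial, or the posterior mean under a flat Dirichlet prior in the large-sample limit — yields exactly
\[
\frac{{}^1\!x_{r,i} + {}^2\!x_{r,i}}{N_1 + N_2},
\]
which is the claimed expression. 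I would close by noting that the law of large numbers makes this pooled estimator converge to $p_{r,i}$ as $N_1 + N_2 \to \infty$, and that whenever $N_2 > 0$ it has strictly smaller variance than the target-only estimator ${}^1\!x_{r,i}/N_1$; this variance reduction is the ``full advantage'' the statement refers to.

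The one step I would be careful about — and the only real subtlety in an otherwise routine argument — is making ``perfect representative'' precise enough that the pooling is actually legitimate. It is not sufficient that the marginal outcome probabilities agree; one also needs the trials across the two environments to be (conditionally) independent, so that concatenating the samples genuinely produces an i.i.d.\ sample of size $N_1 + N_2$. I would therefore fold both requirements — identical outcome distribution and independence of trials — into the definition of perfect representativeness at the outset, after which the remainder is just bookkeeping with multinomial counts.
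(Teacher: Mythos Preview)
Your proposal is correct. It differs from the paper's proof in emphasis rather than in substance: the paper simply checks unbiasedness by computing $\mathbb{E}\{{}^1\!q_{r,i}\}$ via linearity of expectation and the multinomial mean $\mathbb{E}\{{}^j\!x_{r,i}\} = {}^j\!p_{r,i}\,N_j$, obtaining ${}^1\!p_{r,i}$ directly. You instead argue that the concatenated sample is itself i.i.d.\ of size $N_1+N_2$ from the common categorical law, so the pooled ratio is the standard empirical/MLE estimator, and you then add consistency (law of large numbers) and a variance-reduction remark. Both routes are elementary and valid; yours is a bit more informative (it explains \emph{why} pooling helps, not just that the pooled estimator is unbiased), while the paper's expectation calculation is shorter and does not need to invoke independence across environments, since unbiasedness follows from linearity alone.
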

\begin{proof}
$ \boldsymbol{f} = [ {}^j\!x_{r,0} \dots {}^j\!x_{r,n_r} ] $ is drawn from a multinomial distribution with parameters $ N_j, [ {}^j\!p_{r,0} \dots {}^j\!p_{r,n_r} ] $. Therefore, $ \mathbb{E}\{ {}^j\!x_{r,i} \} = {}^j\!p_{r,i} \cdot N_j $. Since $ {}^1\!p_{r,i} = {}^2\!p_{r,i} $, $ \mathbb{E}\{{}^j\!x_{r,i} \} = {}^1\!p_{r,i} \cdot N_j $. This means that $ \mathbb{E}\{ {}^1\!q_{r, i} \} = \frac{\mathbb{E}\{ {}^1\!x_{r,i} \} + \mathbb{E}\{ {}^2\!x_{r,i} \}}{N_1 + N_2} = \frac{{}^1\!p_{r,i} \cdot (N_1 + N_2)}{N_1+N_2} = {}^1\!p_{r,i} $.
\end{proof}

In general, however, the distributions of the environments are different, and the following proposition applies.

\begin{proposition}
If there is any difference between the two distributions, the best estimator for $ {}^1\!p_{r, i} $ considers only the target, since including the frequencies from an alien population would bias the estimator.
\end{proposition}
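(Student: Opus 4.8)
The plan is to mirror the computation in the proof of Proposition~1, but now exploit the hypothesis ${}^1\!p_{r,i}\neq{}^2\!p_{r,i}$ to show that mixing the two samples necessarily perturbs the expected value away from the target parameter. First I would recall that, as established above, ${}^j\!x_{r,i}$ is multinomial with expectation $\mathbb{E}\{{}^j\!x_{r,i}\}={}^j\!p_{r,i}\,N_j$, so the target-only estimator ${}^1\!q_{r,i}={}^1\!x_{r,i}/N_1$ satisfies $\mathbb{E}\{{}^1\!q_{r,i}\}={}^1\!p_{r,i}$ and is therefore unbiased for every $N_1\geq 1$.

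Next I would consider the family of pooled estimators that assign a weight $w\in[0,1]$ to the test counts, $\hat{q}_{r,i}(w)=\frac{{}^1\!x_{r,i}+w\,{}^2\!x_{r,i}}{N_1+w\,N_2}$, which recovers the target-only estimator at $w=0$ and the fully pooled estimator of Proposition~1 at $w=1$. Taking expectations and using the multinomial means gives $\mathbb{E}\{\hat{q}_{r,i}(w)\}={}^1\!p_{r,i}+\frac{w\,N_2}{N_1+w\,N_2}\bigl({}^2\!p_{r,i}-{}^1\!p_{r,i}\bigr)$, so the bias is exactly $\frac{w\,N_2}{N_1+w\,N_2}\bigl({}^2\!p_{r,i}-{}^1\!p_{r,i}\bigr)$. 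Under the hypothesis that the distributions differ in coordinate $i$ the factor ${}^2\!p_{r,i}-{}^1\!p_{r,i}$ is nonzero, and since $N_2\geq 1$ the coefficient $\frac{w\,N_2}{N_1+w\,N_2}$ vanishes only at $w=0$; hence every estimator that gives nonzero weight to the alien sample is biased, and the only unbiased choice in this family is the target-only estimator. I would then note that the same conclusion extends to arbitrary affine combinations $\lambda\,{}^1\!x_{r,i}/N_1+\mu\,{}^2\!x_{r,i}/N_2$ with $\lambda+\mu=1$: unbiasedness forces $\mu\bigl({}^2\!p_{r,i}-{}^1\!p_{r,i}\bigr)=0$, i.e. $\mu=0$, which again singles out the target-only estimator.

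The delicate point is the word \emph{best}: strictly speaking, pooling trades a possibly small bias for a reduction in variance, so the target-only estimator is optimal in the sense of unbiasedness (and of consistency as $N_1\to\infty$ regardless of how $N_2$ grows), not necessarily in the sense of mean-squared error. I would therefore phrase the proposition's guarantee as the absence of bias, which is precisely the property invoked in Sec.~\ref{sec:combining-information-from-multiple-sources} whenever ${}^1\!p_{r,i}$ and ${}^2\!p_{r,i}$ are known to diverge, and leave the residual MSE comparison implicit. This framing also makes the dichotomy with Proposition~1 clean: the pooled estimator is admissible exactly when ${}^1\!p_{r,i}={}^2\!p_{r,i}$, and biased otherwise.
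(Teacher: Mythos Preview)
Your argument is correct and, in fact, more careful than the paper's. The paper's proof simply computes the expectation of the fully pooled estimator ${}^1\!q_{r,i}=\frac{{}^1\!x_{r,i}+{}^2\!x_{r,i}}{N_1+N_2}$, obtains $\mathbb{E}\{{}^1\!q_{r,i}\}=\frac{N_1}{N_1+N_2}\,{}^1\!p_{r,i}+\frac{N_2}{N_1+N_2}\,{}^2\!p_{r,i}$, observes this is not ${}^1\!p_{r,i}$ when the two probabilities differ, and then appeals to the law of large numbers to conclude that as $N_1\to\infty$ the target-only estimator is ``best''. You go further by parametrising the whole family $\hat{q}_{r,i}(w)$, computing the bias $\frac{wN_2}{N_1+wN_2}({}^2\!p_{r,i}-{}^1\!p_{r,i})$ explicitly, and showing that unbiasedness forces $w=0$; you also treat general affine combinations and are explicit that ``best'' here means unbiased (and consistent), not minimum MSE. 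What the paper's version buys is brevity: it examines only the estimator actually introduced in Proposition~1 and dispatches it in two lines. What your version buys is a cleaner logical structure---you actually characterise which mixing weights preserve unbiasedness rather than exhibiting a single biased instance and then gesturing at an asymptotic criterion---and a more honest treatment of the word \emph{best}, which the paper leaves vague.
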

\begin{proof}
The same reasoning as before can be used to see that, when $ {}^1\!p_{r, i} \neq {}^2\!p_{r, i} $, $ \mathbb{E}\{ {}^1\!q_{r, i} \} = \frac{N_1}{N_1+N_2} \cdot {}^1\!p_{r,i} +  \frac{N_2}{N_1+N_2} \cdot {}^2\!p_{r,i} $. Therefore, by the law of large numbers, the best estimator of $ {}^1\!p_{r, i} $ when $ N_1 \to \infty $ uses only samples from $ j = 1$.
\end{proof}

However, when $ N_1 $ is small (as it is initially), and assuming that the test environment distribution is reasonably close to the target one, the outcome frequencies in the test environment may help establish a prior for the target probabilities. If the test frequencies are given a weight that decreases with $ N_1 $, we can ensure that, in the limit, our estimation will be unbiased. This same principle motivates the \textit{decreasing-m-estimate}\cite{martinez2015planning} that we reformulate for our purposes:

\begin{equation}
{}^1q_{r, i} = \frac{{}^1\!x_{r,i} + \frac{m}{\sqrt{1+N_1}} {}^2\!x_{r,i}}{N_1 + \frac{m}{\sqrt{1+N_1}} N_2},
\end{equation}
where $ m $ is the weight given initially to the test executions.

When $ N_1 \to 0 $, the estimation will be based almost entirely on the test executions. Conversely, when $ N_1 $ increases, the weight for the test executions decreases, basing the estimation more and more on the target executions.

Line 20 of Algorithm~\ref{alg:main-loop} updates the $ {}^1\!p_{r,i} $ probabilities of the MENID rules according to the decreasing-m-estimate, while the test environment rules are updated with the simple estimate described in Sec.~\ref{sec:menid-rules}. The most updated probabilities are always available for planning when constructing the transition model $ P_{target}(s,a,s') $ in line 4. Specifically, $ P_{target}(s,a,s') =  {}^1\!q_{r,i} $ if rule $ r \in a $ triggers in $ s $, and $ s \xrightarrow[\Omega_{r,i}]{} s'$ (i.e. $ s' $ follows after the $ i $th outcome). If the transition $ s \xrightarrow[a]{} s' $ is impossible, $ P_{target}(s,a,s') = 0 $.



\section{Simulating Robotic Actions}
\label{sec:simulating}


In order to handle accurate simulations at a near-interactive computation time, we chose SOFA Framework~\cite{faure:hal-00681539}.
SOFA offers the possibility to handle accurate simulations of rigid and deformable objects at interactive time performance. Therefore, it represents a good alternative to more traditional mechanical simulators that do not handle the object dynamics at a reasonable framerate, as well as game engines that do not handle accurate physics. 
As it is an open-source software, we were also able to design the scenarios as well as controlling the parameters as much as we wanted.

\subsection{Simulated Actions}

Within the use case of dismantling electro-mechanical devices, we chose three different robotic actions meant for disassembling a hard drive. The three actions are: (1) levering the Printed Circuit Board (PCB) or from the bay; (2) shaking the bay to remove the PCB; and (3) sucking the PCB with a suction cup and removing it from the bay. We enrich actions with parameters that give nuance over their physical realization and that could influence their success probability (e.g. discrete values of direction and force for \textit{lever}).

These three actions are subject to non-determinism in the action execution, as manipulation skills depend on factors that escape our control in the real environment
(e.g. friction, jerky motions, errors in perception). Simulating these actions require the use of a simulator able to handle the simulations of complex scenes generated from real data.

The resolution of the meshes is a key component for obtaining accurate results, at the price of a high computation time. Therefore, we present in Sec.~\ref{sec:meshResolution} a study evaluating the impact of the model quality on the symbolic error. 

\subsection{The Resolution Problem}

One of the first concerns that arise when using simulation as a proxy environment is realism and performance. On the one hand, we would like the simulator to behave as close as possible as the real world. For that, we need to include detailed 3D meshes that model the real life component. On the other hand, we want the simulator to be efficient enough so it can execute a large number of simulations, and detailed mesh models make collision detection more complex.

At our disposal we have a library of 3D scans of hard drive components that we use to test our approach. We have full control on the detail of such models. We deem sensible to measure the impact of the downgrade. We perform a test on the accuracy and efficiency of the simulated actions with respect to the most detailed model (Sec.~\ref{sec:meshResolution}).

\section{Experimental Study}
\label{sec:probabilistic-planning}

First, we show that simulations with models of different quality yield substantially different qualitative results. On the one hand, this imposes the need to compromise performance to accuracy and vice versa. On the other hand, this shows that simulations with different qualities could already be considered different environments.

Second, we provide evidence that Algorithm~\ref{alg:main-loop} takes advantage of fast simulations to quickly learn the outcomes probabilities and to exploit the best actions in the target environment. We show this by plugging in a high-quality SOFA simulation as target environment.



\subsection{The Mesh Resolution Problem}
\label{sec:meshResolution}


\begin{figure}[tb]
    \centering
    \subfloat[\label{fig:mesh-resolution-scenarios-a}]{%
       \includegraphics[width=\linewidth]{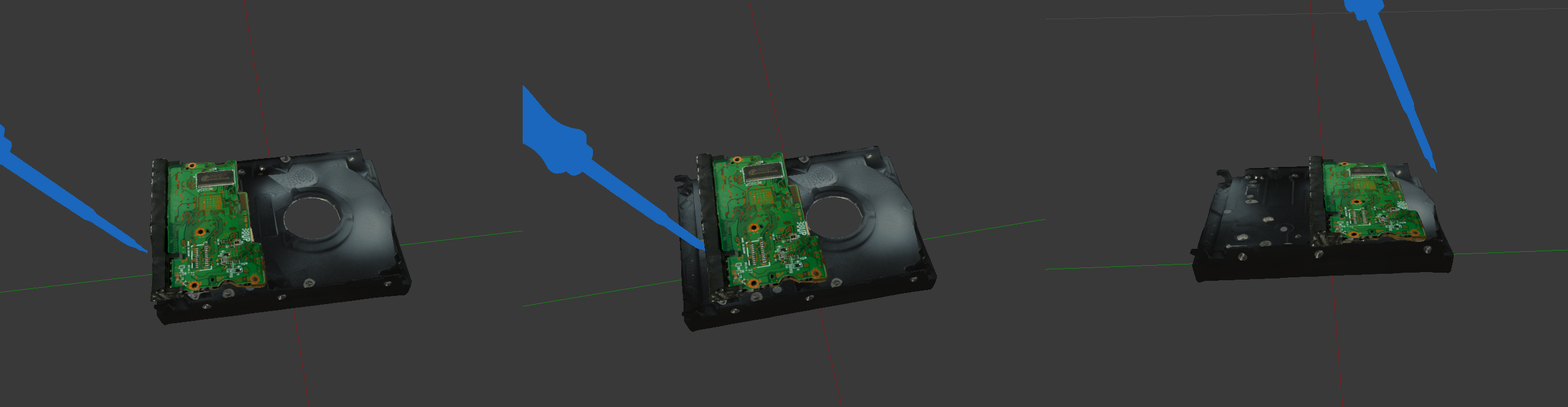}}
       
    \subfloat[\label{fig:mesh-resolution-scenarios-b}]{%
       \includegraphics[width=\linewidth]{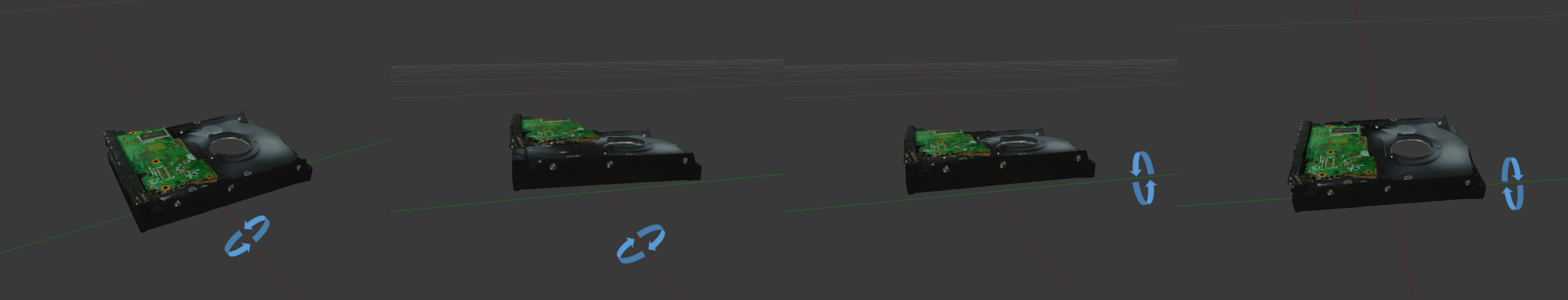}}
       
    \subfloat[\label{fig:mesh-resolution-scenarios-c}]{%
       \includegraphics[width=\linewidth]{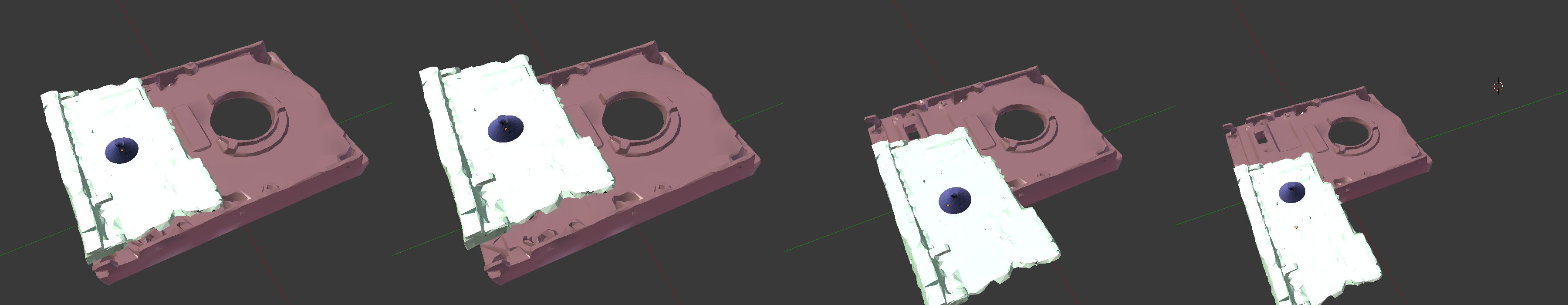}}
       
    \caption{Illustration of the three scenarios: (a) First scenario: the lever (blue) is used to remove the PCB (green) from the bay; (b) Second scenario: different shaking angles are applied to the bay to remove the PCB from it; (3) Third scenario: the PCB (white) is removed from the bay (red) using a suction cup (in black, located on the PCB).}
    \label{fig:mesh-resolution-scenarios}
\end{figure}

We have evaluated the performance/accuracy trade-off of the actions in three scenarios\footnote{The computer specifications for these experiments are: Linux Kernel 5.2.11-100 x86\_64 Fedora 29, Intel(R) Xeon(R) CPU E5-1603 v4 @ 2.80GHz, 16GiB Ram, GeForce GTX 1080.}

\begin{figure}[tb]
    \centering
    \includegraphics[width=1\columnwidth]{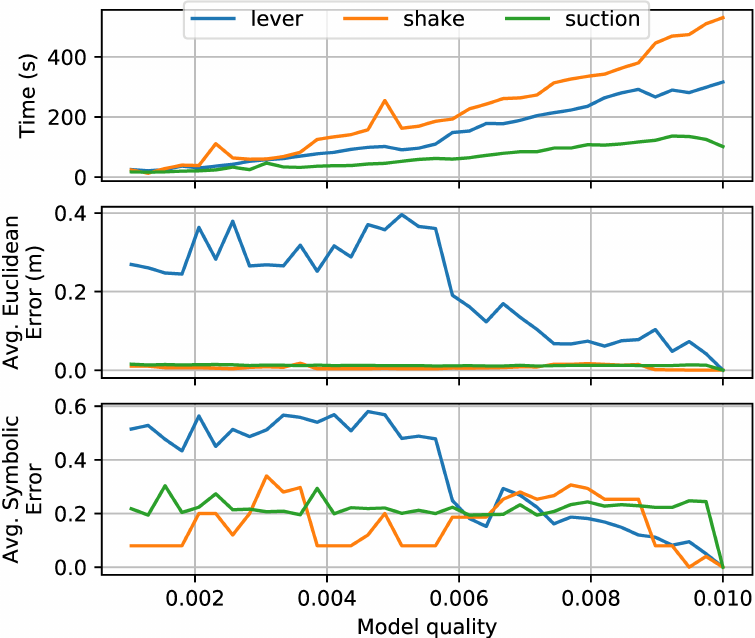}
    \caption{Different performance measures for the \textit{lever}, \textit{shake} and \textit{suck} action as the quality of the models (i.e. proportion of retained vertices) varies.}
    \label{fig:performance-vs-quality}
\end{figure}

\subsubsection{First scenario: levering a PCB} 
The first scenario considers a rigid lever that is used to remove a PCB from the bay of the hard disk (see Fig.~\ref{fig:mesh-resolution-scenarios-a}). All the objects are rigid. We used an Euler implicit integration scheme. Objects' masses were set to their real world counterpart's. In this task, a good location on the PCB to perform the levering action must be found.
In our simulations, we evaluated 20 different positions of the lever, located at the boundary of the PBC.

\subsubsection{Second scenario: shaking the bay}
The second scenario consists in shaking the bay to remove the PCB from it (see Fig.~\ref{fig:mesh-resolution-scenarios-b}). We used the same simulator configuration as for the first scenario. In this task, we tested for 10 different angles of shaking movement, ranging from 0.01 to 5 degrees.


\subsubsection{Third scenario: sucking the PCB}
In the third scenario, we used a deformable suction cup to suck the PCB and remove it from the bay (see Fig.~\ref{fig:mesh-resolution-scenarios-c}). We used the method proposed by Bernardin \etal{}~\cite{Bernardin2019} in this scenario. Both the PCB and the bay were rigid objects with the same properties as for the two first scenarios, while the suction cup was modeled as a deformable object using the co-rotational Finite Element Method (FEM). The success of the task is highly dependent on the location of the suction cups as well as the pressure applied on the PCB. In our simulations, we tested for 9 different positions spread on a $3\times3$ grid on the PCB. We also applied 3 different suctions pressures, ranging from -0.001 to -1135 Pa (difference with atmosphere pressure, 101135 Pa).


The mesh quality was defined as the ratio of the number of points between the reduced models and the original one.
All experiments were performed on 36 different mesh qualities, ranging from 0.00103 to 0.01 with 5 repetitions for each set of parameters. For the lowest quality, the PCB had 303 vertices and 762 triangles while the bay had 491 vertices and 1082 triangles. For the highest quality, the PCB had 3310 vertices and 6776 triangles while the bay had 5183 vertices and 10466 triangles.

Results are shown in Fig.~\ref{fig:performance-vs-quality}. The symbolic error is measured as the $ 1 - J(s,s') $, where $ s, s'$ are the collection of predicates that represent the state at  the end of a low quality simulation and its high quality counterpart, respectively; and $ J(s, s') $ is the Jaccard index between both sets:
\begin{equation}
    J(s, s') = \frac{|s \cap s'|}{|s \cup s'|}
\end{equation}

Pikes on the graphs can be explained as noise on the initial conditions of the experiment. The irregularities of the models may result in the objects being displaced.

The plot already allows us to identify the trend that we anticipated in the lever action, with an inflection point around 0.006: higher qualities result in more execution time and in results significantly different from lower qualities.

\subsection{Interleaved Learning and Planning}

\begin{figure}[tb]
    \centering
    \subfloat[\label{fig:results-a}]{%
       \includegraphics[width=\linewidth]{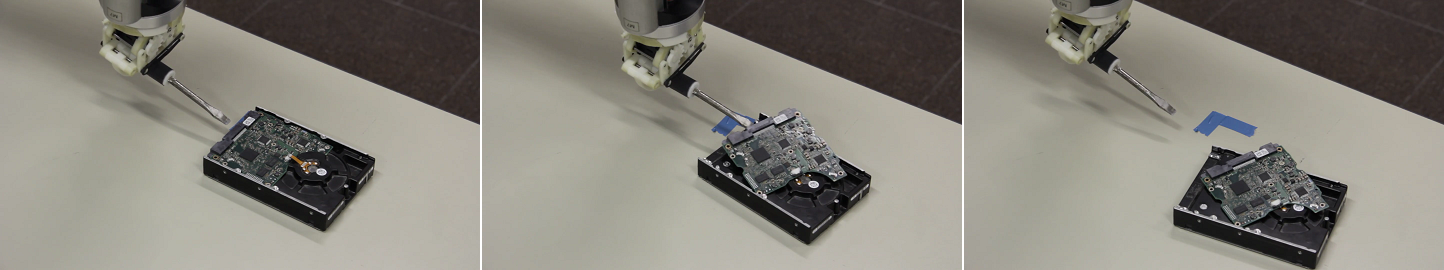}}
       
    \subfloat[\label{fig:results-b}]{%
       \includegraphics[width=\linewidth]{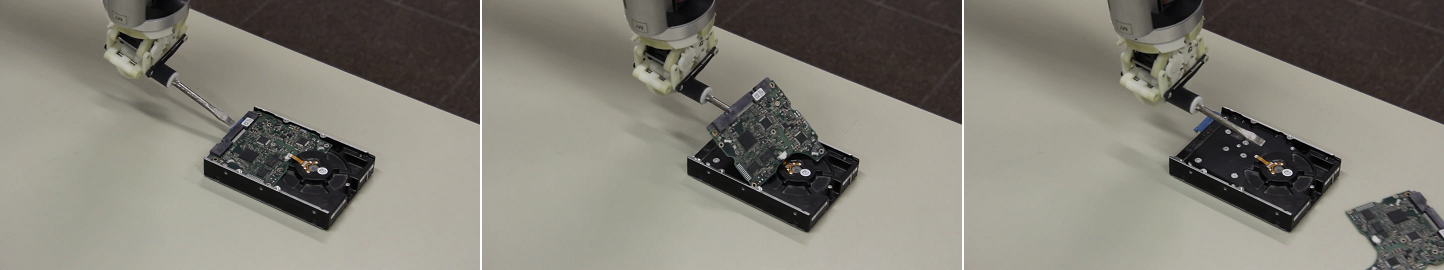}}
    \caption{Two different lever motions for levering and removing a PCB from the hard drive reproduced with the robot arm. The first motion, (a), is unsuccessful, while the second, (b), is successful.}
    \label{fig:motions-real}
\end{figure}

\begin{figure*}[tb]
    \centering
    \subfloat[\label{fig:results-sim-a}]{%
       \includegraphics[width=0.33\linewidth]{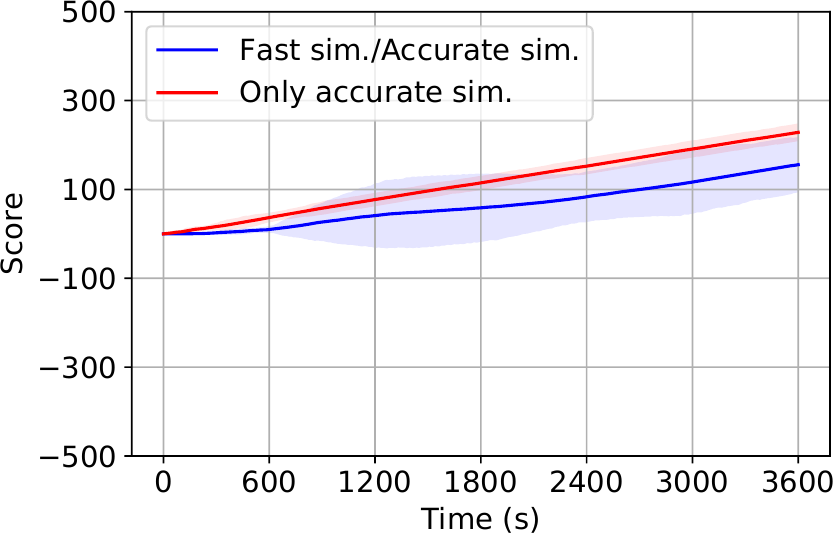}}
    \subfloat[\label{fig:results-sim-b}]{%
       \includegraphics[width=0.33\linewidth]{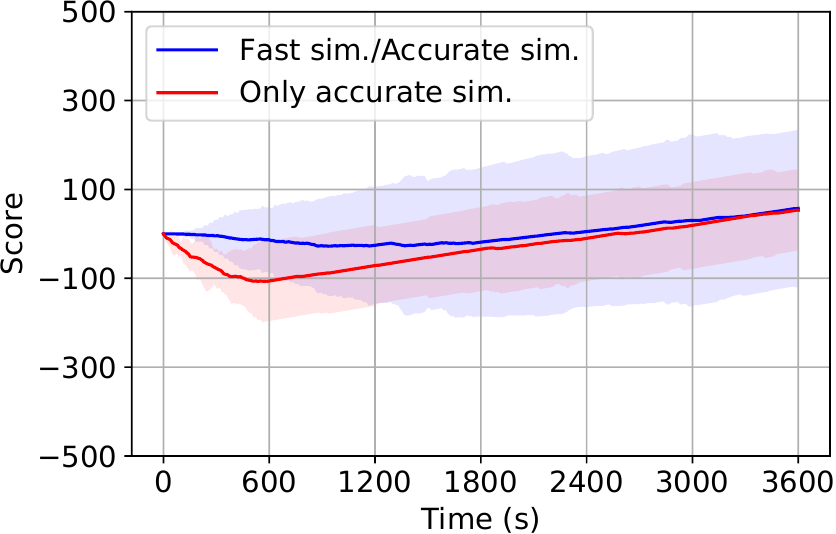}}
    \subfloat[\label{fig:results-sim-c}]{%
       \includegraphics[width=0.33\linewidth]{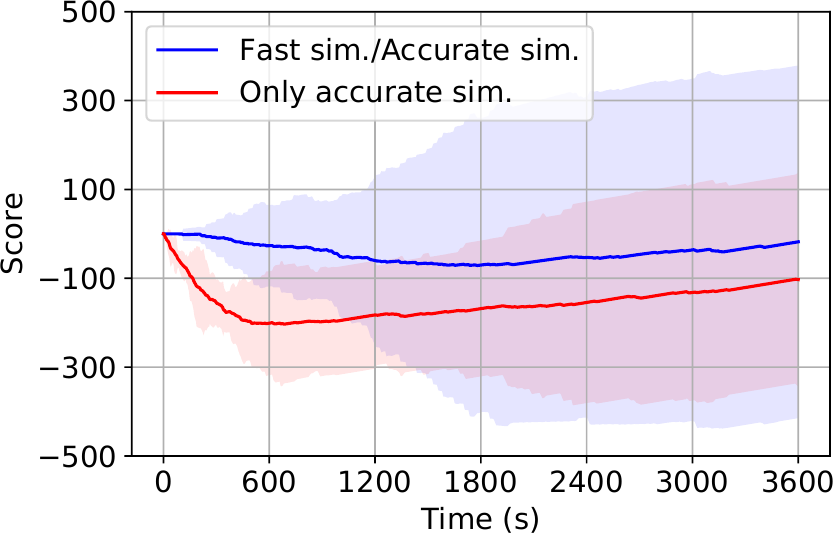}}
    \caption{Results when learning the accurate, but slow simulation environment (target environment). The reward for successful actions is always 1. In (a), there is no penalty for failing an action. In (b) and (c), failures are penalized with a decrement of 5 and 10 to the score, respectively. The blue line shows the accumulated reward when the fast but inaccurate environment is used as test environment. Red line represents a learner that can access exclusively the slow simulator. The shaded regions represent 3 standard deviations around the central trend.}
    \label{fig:results-sim}
    \vspace{-1em}
\end{figure*}

\begin{figure*}[tb]
    \centering
    \subfloat[\label{fig:results-target-a}]{%
       \includegraphics[width=0.33\linewidth]{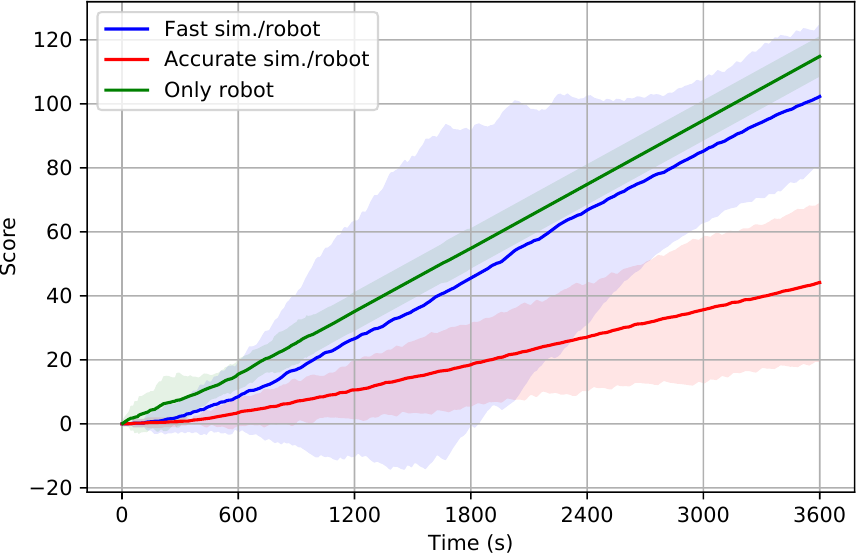}}
    \subfloat[\label{fig:results-target-b}]{%
       \includegraphics[width=0.33\linewidth]{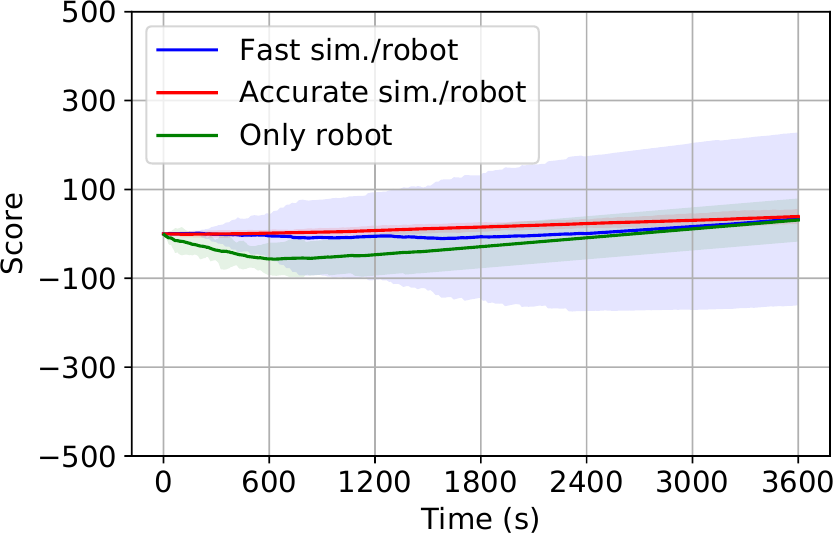}}
    \subfloat[\label{fig:results-target-c}]{%
       \includegraphics[width=0.33\linewidth]{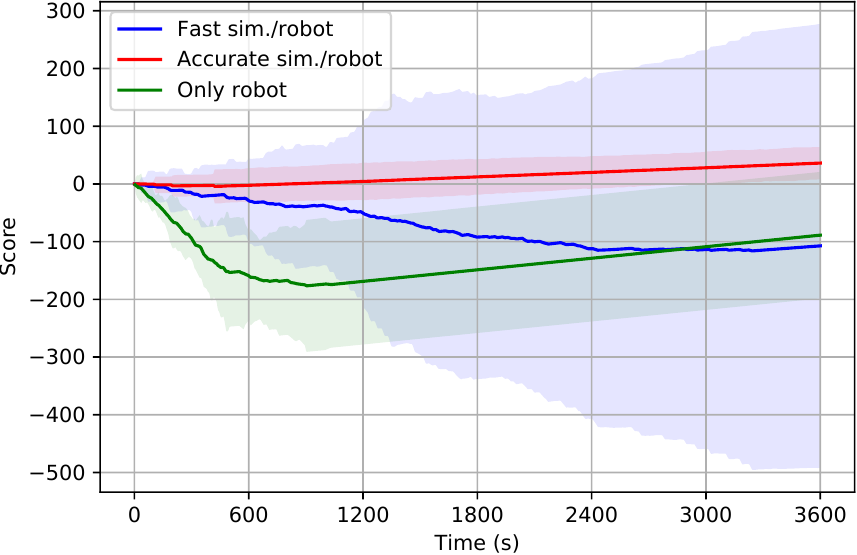}}
    \caption{Results when the target environment is the real world (i.e. physical robot acting in a real-life setting). Scenario (a) features a risk-free environment (no penalty for failures), while scenarios (b) and (c) correspond to scenarios with a penalty of 5 and 10 for failed actions, respectively.}
    \label{fig:results-target}
    \vspace{-1em}
\end{figure*}

Our learning algorithm has been evaluated using two configurations of SOFA and a real-life robot setting as the available environments. The scenes used in our evaluation contain a variety of hard-drive scenarios. The goal in each scenario is the removal of the PCB by means of the available actions (motions with different application points).

One simulation environment has low quality ($ q = 0.001 $) models. While simulations in this high-performance setting run promptly, the accuracy of the results varies sensibly with respect to the high-quality models, as depicted in Fig.~\ref{fig:performance-vs-quality}. The other simulation has high-quality models ($ q = 0.01 $). This setting takes, on average, much more time than its low-resolution counterpart. It can be appreciated that the outcomes vary sensibly, which suggests that high-quality models capture more complex interactions. To account for stochasticity, random noise is added to the input trajectories and the physical parameters (friction and restitution).

The robot environment features a WAM robotic arm in front of a table. On the table is a hard drive with the PCB side facing upwards. Several levering motions can be executed to pull the PCB out of the drive, and some of these actions are more successful than others, as shown in Fig.~\ref{fig:motions-real}.

Our baseline learns purely from the target environment. This can be achieved simply by setting $ T = 0 $ in our algorithm (this effectively disables the execution of actions in the test environment). We compare this to $ T = 20 $ (i.e. 20 seconds for testing an action). We use the \textit{m-estimate} described in Sec.~\ref{sec:estimating-a-categorical-distribution} with $ m = 10 $. The success or failure of an action is rewarded and penalized, respectively (benefit vs risk tradeoff). To evaluate, we use the accumulated reward (score) over several episodes for a total duration of 1 hour.

We have run several sets of experiments assigning different rewards to successful and unsuccessful executions of actions. In Fig.~\ref{fig:results-sim}, we show in blue the result of trying to learn the best action(s) to execute in the high-quality simulation by interleaving executions of actions in the low-quality and the high-quality simulations. The red line shows what happens when only the target environment (the high-quality simulation) is available.

For each reward choice, we have run five experiments. The solver used to calculate the best action is a custom implementation of a Thompson sampler~\cite{Russo2018}.

When there is no risk involved (i.e. the penalty for failing is 0), as in Fig.~\ref{fig:results-sim-a}, it is usually best to encourage experimentation in the target environment, since failing an action does not incur in any negative consequence and we sample results directly from the target distribution, while simulation only wastes time. However, when more relevance is given to failures, using the target environment for testing becomes too risky. Large negative reward can be accumulated, and the benefits of having a virtual environment to simulate actions and gain more information about them is more evident.

Our method works thanks to the following: (1) the test environment allows sampling the action outcomes from a probability distribution that is sufficiently similar to the target one; (2) this can be done in less time than using the target environment directly; and (3) risk-free testing avoids the occasional penalties and adverse effects that occur at the beginning, in the abscense of experience.

Fig.~\ref{fig:results-target} shows the result for settings that involve the real-life robot. We show: (1) in blue, the low-quality simulation/real robot setting; (2) in red, the high-quality simulation/real robot setting; and (3) in green, what happens when only interactions with the real robot are available (baseline). These results show that, when simulation is involved, we avoid the accumulation of failures at the beginning of the risky environments (Fig.~\ref{fig:results-target-b}, Fig.~\ref{fig:results-target-c}). However, simulation requires time, so if no penalty, or mild penalties are associated to failures, direct executions with the robot or with less precise simulations give rewards much faster (Fig.~\ref{fig:results-target-a}, Fig.~\ref{fig:results-target-b}). Since our method will eventually decide to not simulate anymore, the end slope of the curves is always the same.



\section{Conclusions and Future Work}
\label{sec:conclusions}
Motivated by the idea of using a simulator to gather symbolic information about the success ratio of robot actions, we have presented a framework that allows to learn from experiences performed in different environments. We have introduced MENID rules, a formalism to model different outcomes according to multiple environments and their uncertainty. We have also presented an algorithm able to decide in which environment execute the actions.


Our framework has been evaluated with scans of real objects and actions used in a hard drive disassembly task: lever, shake, and suck. The SOFA simulator has acted as test environment in a low-resolution configuration, and as target environment in a high-resolution one by adding noise. Experiments with a real robot have been also conducted. Compared to a standard strategy that learns only from the target environment, our algorithm learns the actions' outcomes with less risk.

Once we have validated the learning of action outcomes, in the future we would like to extend the proposed algorithm to learn also new outcomes to the rules and even new rules from scratch. Finally, we would like to explore the case of more than two environments in detail.





\bibliographystyle{IEEEtran} 
\bibliography{IEEEabrv,references}

\begin{thebibliography}{10}
\providecommand{\url}[1]{#1}
\csname url@rmstyle\endcsname
\providecommand{\newblock}{\relax}
\providecommand{\bibinfo}[2]{#2}
\providecommand\BIBentrySTDinterwordspacing{\spaceskip=0pt\relax}
\providecommand\BIBentryALTinterwordstretchfactor{4}
\providecommand\BIBentryALTinterwordspacing{\spaceskip=\fontdimen2\font plus
\BIBentryALTinterwordstretchfactor\fontdimen3\font minus
  \fontdimen4\font\relax}
\providecommand\BIBforeignlanguage[2]{{%
\expandafter\ifx\csname l@#1\endcsname\relax
\typeout{** WARNING: IEEEtran.bst: No hyphenation pattern has been}%
\typeout{** loaded for the language `#1'. Using the pattern for}%
\typeout{** the default language instead.}%
\else
\language=\csname l@#1\endcsname
\fi
#2}}

\bibitem{littman1997probabilistic}
M.~L. Littman, ``{Probabilistic propositional planning: representations and
  complexity},'' in \emph{Proceedings of the National Conference on Artificial
  Intelligence}, 1997.

\bibitem{Eppe2019FromProblem-Solving}
M.~Eppe, P.~D.~H. Nguyen, and S.~Wermter, ``From semantics to execution:
  Integrating action planning with reinforcement learning for robotic causal
  problem-solving,'' \emph{Frontiers in Robotics and AI}, vol.~6, p. 123, 2019.

\bibitem{Aineto2018a}
D.~Aineto, S.~Jim{\'{e}}nez, and E.~Onaindia, ``{Learning STRIPS Action Models
  with Classical Planning},'' \emph{International Conference on Automatic
  Planning and Scheduling (ICAPS)}, 2018.

\bibitem{Yang2007}
Q.~Yang, K.~Wu, and Y.~Jiang, ``{Learning action models from plan examples
  using weighted MAX-SAT},'' \emph{Artificial Intelligence}, vol. 171, no. 2-3,
  pp. 107--143, 2007.

\bibitem{Cresswell2013}
S.~N. Cresswell, T.~L. McCluskey, and M.~M. West, ``{Acquiring planning domain
  models using LOCM},'' \emph{Knowledge Engineering Review}, vol.~28, no.~2,
  pp. 195--213, 2013.

\bibitem{Zhuo2014}
H.~H. Zhuo, H.~Mu{\~{n}}oz-Avila, and Q.~Yang, ``{Learning hierarchical task
  network domains from partially observed plan traces},'' \emph{Artificial
  Intelligence}, vol. 212, no.~1, pp. 134--157, 2014.

\bibitem{draper1994probabilistic}
D.~Draper, S.~Hanks, and D.~S. Weld, ``{Probabilistic Planning with Information
  Gathering and Contingent Execution},'' in \emph{Proceedings of the Second
  International Conference on Artificial Intelligence Planning Systems}, 1994.

\bibitem{pasula2007learning}
H.~M. Pasula, L.~S. Zettlemoyer, and L.~P. Kaelbling, ``{Learning symbolic
  models of stochastic domains},'' \emph{Journal of Artificial Intelligence
  Research}, vol.~29, pp. 309--352, 2007.

\bibitem{martinez2017relational}
D.~Mart\'inez, G.~Aleny\`a, and C.~Torras, ``Relational reinforcement learning
  with guided demonstrations,'' \emph{Artificial Intelligence}, vol. 247, pp.
  295 -- 312, 2017, special Issue on AI and Robotics.

\bibitem{Lipovetzky2015e}
N.~Lipovetzky, M.~Ramirez, and H.~Geffner, ``{Classical Planning with
  Simulators: Results on the Atari Video Games},'' in \emph{IJCAI}, 2015, pp.
  1610 -- 1610.

\bibitem{faure:hal-00681539}
F.~Faure, C.~Duriez, H.~Delingette, J.~Allard, B.~Gilles, S.~Marchesseau,
  H.~Talbot, H.~Courtecuisse, G.~Bousquet, I.~Peterlik, and S.~Cotin, ``{SOFA:
  A Multi-Model Framework for Interactive Physical Simulation},'' in \emph{Soft
  Tissue Biomechanical Modeling for Computer Assisted Surgery}, {Yohan Payan},
  Ed.\hskip 1em plus 0.5em minus 0.4em\relax Springer, 2012, pp. 283--321.

\bibitem{Martinez_JMLR17}
D.~Mart\'{i}nez, G.~Aleny\`{a}, T.~Ribeiro, K.~Inoue, and C.~Torras,
  ``Relational reinforcement learning for planning with exogenous effects,''
  \emph{Journal of Machine Learning Research}, vol.~18, no.~78, pp. 1--44,
  2017.

\bibitem{Russo2018}
D.~J. Russo, B.~Van~Roy, A.~Kazerouni, I.~Osband, and Z.~Wen, ``A tutorial on
  thompson sampling,'' \emph{Found. Trends Mach. Learn.}, vol.~11, no.~1, p.
  1–96, July 2018.

\bibitem{keller2012prost}
T.~Keller and P.~Eyerich, ``{PROST: Probabilistic Planning Based on UCT},'' in
  \emph{International Conference on Automatic Planning and Scheduling (ICAPS)},
  2012.

\bibitem{kolobov2012blrtdp}
A.~Kolobov, {Mausam}, and D.~S. Weld, ``{LRTDP vs. UCT for Online Probabilistic
  Planning},'' in \emph{AAAI}, 2012, pp. 1786--1792.

\bibitem{issakkimuthua2015hindsight}
M.~Issakkimuthua, A.~Fern, R.~Khardon, P.~Tadepalli, and S.~Xue, ``{Hindsight
  Optimization for Probabilistic Planning with Factored Actions},''
  \emph{International Conference on Automatic Planning and Scheduling (ICAPS)},
  pp. 120--128, 2015.

\bibitem{thompson1987sample}
S.~K. Thompson, ``{Sample size for estimating multinomial proportions},''
  \emph{American Statistician}, 1987.

\bibitem{martinez2015planning}
D.~Mart{\'{i}}nez, G.~Aleny{\`{a}}, and C.~Torras, ``{Planning robot
  manipulation to clean planar surfaces},'' \emph{Engineering Applications of
  Artificial Intelligence}, vol.~39, pp. 23--32, 2015.

\bibitem{Bernardin2019}
A.~{Bernardin}, C.~{Duriez}, and M.~{Marchal}, ``An interactive
  physically-based model for active suction phenomenon simulation,'' in
  \emph{2019 IEEE/RSJ International Conference on Intelligent Robots and
  Systems (IROS)}, 2019, pp. 1466--1471.

\end{thebibliography}

\end{document}